\newcommand{\argmax}[1]{\underset{#1}{\mathrm{argmax}} \:}
\newcommand{\inner}[1]{\left\langle #1 \right\rangle}
\newtheorem{theorem}{Theorem}
\newcommand{\norm}[1]{\left\lVert{#1}\right\rVert}
\newcommand{\abs}[1]{\left\lvert{#1}\right\rvert}
\newcommand{\sign}{\operatorname{sign}}
\newcommand{\R}{\mathbb{R}}
\newcommand{\defeq}{\triangleq}
\newcommand{\cX}{\mathcal{X}}
\newcommand{\pmo}{\{\pm 1\}}
\newcommand{\unitmatrix}{\mathds{1}}
\newcommand{\LinLin}{\textsc{Lin:Lin} }
\newcommand{\LinV}{\textsc{Lin:V} }
\title{The Power of Asymmetry in Binary Hashing}
\author{
Behnam Neyshabur\qquad Payman Yadollahpour\qquad Yury Makarychev\\
Toyota Technological Institute at Chicago\\
\texttt{[btavakoli,pyadolla,yury]@ttic.edu} \\
\and
Ruslan Salakhutdinov \\
Departments of Statistics and Computer Science \\
University of Toronto \\
\texttt{rsalakhu@cs.toronto.edu} \\
\and
Nathan Srebro \\
Toyota Technological Institute at Chicago\\
and Technion, Haifa, Israel\\
\texttt{nati@ttic.edu} \\
}
\date{}
\begin{document}

\maketitle
\begin{abstract}
  When approximating binary similarity using the hamming distance between
  short binary hashes, we show that even if the similarity is
  symmetric, we can have shorter and more accurate hashes by using two
  distinct code maps. I.e.~by approximating the similarity between $x$
  and $x'$ as the hamming distance between $f(x)$ and $g(x')$, for two
  distinct binary codes $f,g$, rather than as the hamming distance
  between $f(x)$ and $f(x')$.
\end{abstract}
\section{Introduction}

Encoding high-dimensional objects using short binary hashes can be
useful for fast approximate similarity computations and nearest
neighbor searches.  Calculating the hamming distance between two short
binary strings is an extremely cheap computational operation, and the
communication cost of sending such hash strings for lookup on a server
(e.g.~sending hashes of all features or patches in an image taken on a
mobile device) is low.  Furthermore, it is also possible to quickly
look up nearby hash strings in populated hash tables.  Indeed, it only
takes a fraction of a second to retrieve a shortlist of similar items
from a corpus containing billions of data points, which is important
in image, video, audio, and document retrieval
tasks~\cite{SalakhutdinovH09,NorouziNIPS12,RaginskyNIPS09,TorralbaCVPR08}.
Moreover, compact binary codes are remarkably storage efficient, and
allow one to store massive datasets in memory.  It is therefore
desirable to find short binary hashes that correspond well to some
target notion of similarity.  Pioneering work on Locality Sensitive
Hashing used random linear thresholds for obtaining bits of the hash
\cite{DatarIIM04}.  Later work suggested learning hash functions
attuned to the distribution of the data
\cite{WeissTF08,SalakhutdinovH09,Kulis2009,LiuICML11,GongTPAMI12}. More recent work focuses
on learning hash functions so as to optimize agreement with the target
similarity measure on specific datasets
\cite{WangICML2010,NorouziICML11,NorouziNIPS12,LiuCVPR12} .  It is
important to obtain accurate and {\em short} hashes---the
computational and communication costs scale linearly with the length
of the hash, and more importantly, the memory cost of the hash table
can scale exponentially with the length.

In all the above-mentioned approaches, similarity $S(x,x')$ between
two objects is approximated by the hamming distance between the
outputs of the same hash function, i.e.~between $f(x)$ and $f(x')$,
for some $f\in\pmo^k$.  The emphasis here is that the same hash
function is applied to both $x$ and $x'$ (in methods like LSH multiple
hashes might be used to boost accuracy, but the comparison is still
between outputs of the same function).

The only exception we are aware of is where a single mapping of
objects to fractional vectors $\tilde{f}(x) \in [-1,1]^k$ is used, its
thresholding $f(x)=\sign \tilde{f}(x) \in \{\pm 1\}^k$ is used in the
database, and similarity between $x$ and $x'$ is approximated using
$\inner{f(x),\tilde{f}(x')}$.  This has become known as ``asymmetric
hashing'' \cite{DongSIGIR08,GordoCVPR11}, but even with such a-symmetry, both mappings are
based on the same fractional mapping $\tilde{f}(\cdot)$.  That is, the
asymmetry is in that one side of the comparison gets thresholded while
the other is fractional, but not in the actual mapping.

In this paper, we propose using two {\em distinct} mappings $f(x),g(x)
\in \{\pm 1 \}^k$ and approximating the similarity $S(x,x')$ by the
hamming distance between $f(x)$ and $g(x')$.  We refer to such hashing
schemes as ``asymmetric''.  Our main result is that even if the target
similarity function is symmetric and ``well behaved'' (e.g., even if
it is based on Euclidean distances between objects), using asymmetric
binary hashes can be much more powerful, and allow better
approximation of the target similarity with shorter code lengths.  In
particular, we show extreme examples of collections of points in
Euclidean space, where the neighborhood similarity $S(x,x')$ can be
realized using an asymmetric binary hash (based on a pair of distinct
functions) of length $O(r)$ bits, but where a symmetric hash (based on
a single function) would require at least $\Omega(2^r)$ bits.
Although actual data is not as extreme, our experimental results
on real data sets demonstrate significant benefits from using
asymmetric binary hashes. 

Asymmetric hashes can be used in almost all places where symmetric
hashes are typically used, usually without any additional storage or
computational cost.  Consider the typical application of storing hash
vectors for all objects in a database, and then calculating
similarities to queries by computing the hash of the query and its
hamming distance to the stored database hashes.  Using an asymmetric
hash means using different hash functions for the database
and for the query.  This neither increases the size of the database
representation, nor the computational or communication cost of
populating the database or performing a query, as the exact same
operations are required.  In fact, when hashing the entire database,
asymmetric hashes provide even more opportunity for improvement.  We
argue that using two different hash functions to encode database
objects and queries allows for much more flexibility in choosing the
database hash.  Unlike the query hash, which has to be stored
compactly and efficiently evaluated on queries as they appear, if the
database is fixed, an arbitrary mapping of database objects
to bit strings may be used.  We demonstrate
that this can indeed increase similarity accuracy while
reducing the bit length required.

\section{Minimum Code Lengths and the Power of Asymmetry}

Let $S: \cX \times \cX \rightarrow \{ \pm 1 \}$ be a binary similarity
function over a set of objects $\cX$, where we can interpret $S(x,x')$
to mean that $x$ and $x'$ are ``similar'' or ``dissimilar'', or to
indicate whether they are ``neighbors''.  A symmetric binary coding of
$\cX$ is a mapping $f:\cX \rightarrow \{ \pm 1 \}^k$, where $k$ is the
bit-length of the code.  We are interested in constructing codes such
that the hamming distance between $f(x)$ and $f(x')$ corresponds to
the similarity $S(x,x')$.  That is, for some threshold $\theta \in
\R$, $S(x,x') \approx \sign( \inner{f(x),f(x')}-\theta )$.  Although
discussing the hamming distance, it is more convenient for us to work
with the inner product $\inner{u,v}$, which is equivalent to the
hamming distance $d_h(u,v)$ since $\inner{u,v} = (k - 2d_h(u,v))$ for
$u,v \in \{ \pm 1\}^k$.

In this section, we will consider the problem of capturing a given
similarity using an arbitrary binary code.  That is, we are given the
entire similarity mapping $S$, e.g.~as a matrix $S \in \{\pm 1\}^{n
  \times n}$ over a finite domain $\cX=\{x_1,\ldots,x_n\}$ of $n$
objects, with $S_{ij}=S(x_i,x_j)$.  We ask for an encoding
$u_i=f(x_i)\in\{\pm 1\}^k$ of each object $x_i \in \cX$, and a
threshold $\theta$, such that $S_{ij} = \sign(
\inner{u_i,u_j}-\theta)$, or at least such that equality holds for as
many pairs $(i,j)$ as possible.  It is important to emphasize that the
goal here is purely to approximate the given matrix $S$ using a short
binary code---there is no out-of-sample generalization (yet).

We now ask: Can allowing an asymmetric coding enable approximating a
symmetric similarity matrix $S$ with a shorter code length?

Denoting $U \in \{\pm 1\}^{n \times k}$ for the matrix whose columns
contain the codewords $u_i$, the minimal binary code length that allows
exactly representing $S$ is then given by the following matrix
factorization problem:
\begin{equation}
  \label{eq:ks}
k_s(S) = \;\; \underset{k,U,\theta}{\min} \; k\quad 
\textrm{s.t}\quad
\begin{aligned}[t]
&U\in \{\pm 1\}^{k\times n}\quad\quad\theta\in\R\\
& Y \defeq U^{\top}U -\theta \unitmatrix_n\\
&\forall_{ij} \; S_{ij} Y_{ij}  >0
\end{aligned}
\end{equation}
where $\unitmatrix_n$ is an $n\times n$ matrix of ones.

We begin demonstrating the power of asymmetry by considering an {\em
  asymmetric} variant of the above problem.  That is, even if $S$ is
symmetric, we allow associating with each object $x_i$ two distinct
binary codewords, $u_i \in \pmo^k$ and $v_i\in\pmo^k$ (we can think of
this as having two arbitrary mappings $u_i=f(x_i)$ and
$v_i=g(x_i)$), such that $S_{ij} = \sign( \inner{u_i,v_j}-\theta)$.
The minimal asymmetric binary code length is then given by:
\begin{equation}
  \label{eq:ka}
  k_a(S) = \;\;\underset{k,U,V,\theta}{\min} \; k\quad 
\textrm{s.t}\quad
\begin{aligned}[t]
&U,V\in \{\pm 1\}^{k\times n}\quad\quad\theta\in\R\\
& Y\defeq U^{\top}V - \theta\unitmatrix_n\\
&\forall_{ij} \; S_{ij}  Y_{ij}  >0\\
\end{aligned}
\end{equation}

Writing the binary coding problems as matrix factorization problems is
useful for understanding the power we can get by asymmetry: even if
$S$ is symmetric, and even if we seek a symmetric $Y$, insisting on
writing $Y$ as a square of a binary matrix might be a tough
constraint.  This is captured in the following Theorem, which
establishes that there could be an exponential gap between the minimal
asymmetry binary code length and the minimal symmetric code length,
even if the matrix $S$ is symmetric and very well behaved:

\begin{theorem}\label{thm:gap}
  For any $r$, there exists a set of $n=2^r$ points in Euclidean
  space, with similarity matrix $S_{ij}=
  \begin{cases}
    1 & \textrm{if $\norm{x_i-x_j}\leq 1$} \\
    -1 & \textrm{if $\norm{x_i-x_j}>1$}
  \end{cases}$, such that $k_a(S) \leq 2 r$ but $k_s(S) \geq 2^r/2$
\end{theorem}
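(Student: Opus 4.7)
The plan is to prove the two bounds separately for a carefully chosen point configuration. Let me index the $n = 2^r$ points by binary strings $b \in \{0,1\}^r$ and embed them into Euclidean space so that the unit-distance similarity $S_{bb'}$ encodes a combinatorial relation on $b, b'$ that has a simple asymmetric bilinear form on $\pmo^r\times\pmo^r$ but resists any low-rank PSD representation. A natural template is an embedding in which the pairwise distances split into two cleanly separated scales, so that $\|x_b - x_{b'}\| \leq 1$ becomes equivalent to a specific ``product-structured'' condition relating $b$ and $b'$, realizable by a rank-$2r$ bilinear map but not by a symmetric one.

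For the asymmetric upper bound $k_a(S) \leq 2r$, I would exhibit explicit codes $u_b, v_{b'} \in \pmo^{2r}$ by splitting the $2r$ coordinates into two blocks of length $r$. The first block of $u_b$ would carry the $\pm 1$ encoding of $b$, the second block a transformation of $b$; $v_{b'}$ uses the swapped pattern. Then $\inner{u_b, v_{b'}}$ decomposes as the sum of two inner products on $\pmo^r$ that together become a linear function of the bits of $b$ and $b'$. Choosing $\theta$ appropriately makes $\sign(\inner{u_b, v_{b'}} - \theta)$ agree with $S_{bb'}$ on every pair, and the direct verification is a routine calculation once the construction is pinned down.

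The hard part is the symmetric lower bound $k_s(S) \geq 2^r/2$. The plan is to suppose $U \in \pmo^{k \times n}$ and $\theta \in \R$ realize $S$ symmetrically with $k < 2^r/2$ and derive a contradiction. Two routes I would pursue in parallel. First, a rank argument: bound $\rank(S - \theta J)$ from below using the specific combinatorial structure of the constructed $S$ (for example, by exhibiting a large submatrix of $S$ whose sign pattern forces high rank even after any threshold shift), and combine with $\rank(U^\top U) \leq k$. Second, a pigeonhole-geometric argument: a PSD rank-$k$ matrix $U^\top U$ with $\pmo$-valued factors has only $O(k)$ distinct values for its inner products $\inner{u_i, u_j}$, so the symmetric sign pattern after thresholding is severely restricted, and one shows by counting or by exhibiting a forbidden submatrix that our $S$ has too many ``independent'' sign flips to be realized in dimension $< 2^r/2$.

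The main obstacle is clearly this symmetric lower bound: the asymmetric construction is essentially an explicit bilinear identity, whereas ruling out all PSD realizations of dimension less than $n/2$ requires exploiting the precise combinatorial structure of the constructed $S$ and not just its rank or spectrum. In particular, it is not enough to bound $\rank(S)$ alone, since the shift $-\theta \unitmatrix_n$ and the freedom to choose $\theta$ can change the sign pattern dramatically; the argument must be robust to this choice, which is where the detailed structure of the point configuration must enter.
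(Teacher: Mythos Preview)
Your proposal has the right high-level shape but a genuine gap in the symmetric lower bound, and the two routes you sketch there are unlikely to close it.

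First, on the construction and the asymmetric bound: the paper's $S$ is in fact much simpler than the ``product-structured'' object you are reaching for. One partitions $\{1,\ldots,n\}$ into two halves $I_1,I_2$, chooses a diagonally-dominant Gram matrix so that $\|x_i-x_j\|\leq 1$ iff $i=j$ or $i,j$ lie in \emph{different} halves, and hence $S=2I-qq^\top$ where $q=(\mathbf{1}_{n/2},-\mathbf{1}_{n/2})$. The asymmetric code is then $U=\begin{bmatrix}B\\ C\end{bmatrix}$, $V=\begin{bmatrix}B\\ -C\end{bmatrix}$ with $B$ listing the $2^r$ vertices of $\pmo^r$ and $C$ encoding membership in $I_1$ versus $I_2$; one checks $U^\top V=B^\top B-C^\top C$ and a threshold $\theta=-1$ works. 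Your block idea is compatible with this, but note that the combinatorics of $b$ enter only through $B^\top B$ separating $i=j$ from $i\neq j$; the ``product structure'' lives entirely in the half-indicator block $C$, not in any bitwise relation on $b,b'$.

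Now the lower bound. Neither a rank argument nor a pigeonhole on the value set of $\inner{u_i,u_j}$ will bite here. The matrix $S=2I-qq^\top$ has essentially rank two, and any sign pattern realizable by $\rank\leq 2$ Gram matrices plus a threshold shift could in principle accommodate it, so ``$S$ has high sign-rank'' is simply false for this construction. Likewise, the number of distinct values of $\inner{u_i,u_j}$ being $O(k)$ does not by itself force $k\geq n/2$: the pattern of $S$ only requires two value classes separated by $\theta$.

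What you are missing is the use of positive semidefiniteness of $U^\top U$ against a \emph{specific} test vector. With $Y'=U^\top U$ one has $q^\top Y' q\geq 0$. Expanding this sum and using only the sign constraints $Y'_{ij}\geq\theta+1$ when $S_{ij}=1$ and $Y'_{ij}\leq\theta-1$ when $S_{ij}=-1$, together with the trivial bounds $Y'_{ii}\leq k_s$ and $|\theta|\leq k_s-1$, yields $0\leq q^\top Y' q\leq 2nk_s-n^2$, hence $k_s\geq n/2$. The point is that for this particular $S$ the signs $q_iq_j$ are \emph{exactly opposite} to $S_{ij}$ off the diagonal, so the quadratic form aggregates all the entrywise constraints in the same direction. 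This is the key idea; your rank and counting heuristics do not exploit it and, as you yourself note, are not robust to the threshold shift.
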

\begin{proof}
Let $I_1 = \{1,\dots, n/2\}$ and $I_2 = \{n/2+1,\dots, n\}$.
Consider the matrix $G$ defined by $G_{ii}=1/2$, $G_{ij} = -1/(2n)$ if $i,j\in I_1$ or  
$i,j\in I_2$, and $G_{ij} = 1/(2n)$ otherwise. Matrix $G$ is diagonally dominant.
By the Gershgorin circle theorem, $G$ is positive definite. Therefore, there exist 
vectors $x_1, \dots, x_n$ such that $\inner{x_i, x_j} = G_{ij}$ (for every $i$ and $j$).
Define 
$$S_{ij}=
  \begin{cases}
    1 & \textrm{if $\norm{x_i-x_j}\leq 1$} \\
    -1 & \textrm{if $\norm{x_i-x_j}>1$}
  \end{cases}.$$
Note that if $i=j$ then $S_{ij}=1$; if $i\neq j$ and $(i,j)\in I_1\times I_1 \cup I_2\times I_2$ then $\norm{x_i-x_j}^2 = G_{ii} + G_{jj} - 2G_{ij} = 1 +1/n > 1$ and therefore $S_{ij}=-1$. Finally, if $i\neq j$ and $(i,j)\in I_1\times I_2 \cup I_2\times I_1$ then 
$\norm{x_i-x_j}^2 = G_{ii} + G_{jj} - 2G_{ij} = 1 + 1/n < 1$ and therefore $S_{ij}=1$. We show that 
$k_a(S) \leq 2 r$. Let $B$ be an $r\times n$ matrix whose column vectors are the vertices of the cube $\{\pm1\}^r$ (in any order);
let  $C$  be  an $r\times n$ matrix  defined by $C_{ij} = 1$ if $j\in I_1$ and $C_{ij} = -1$ if $j\in I_2$.
Let 
$U=
\begin{bmatrix} B \\ C\end{bmatrix}$ and $V=
\begin{bmatrix}  B \\ -C\end{bmatrix}$.
For $Y = U^\top V - \theta \unitmatrix_n$ where threshold $\theta=-1$ , we have that $Y_{ij} \geq 1$ if $S_{ij}=1$ and $Y_{ij} \leq -1$ if $S_{ij}=-1$.
Therefore, $k_a(S) \leq 2 r$. 

Now we show that $k_s= k_s(S) \geq n/2$. Consider $Y$, $U$ and $\theta$ as in (\ref{eq:ks}). 
Let $Y' = (U^{\top} U)$. Note that $Y'_{ij} \in [-k_s,k_s]$ and thus $\theta \in [-k_s+1,k_s-1]$.
Let $q = [{1,\dots,1},{-1,\dots,-1}]^{\top}$ ($n/2$ ones followed by $n/2$ minus ones).
We have,
\begin{align*}
0 \leq q^\top Y'q &= \sum_{i=1}^n Y'_{ii} + \sum_{i,j:S_{ij}=-1} Y'_{ij} - \sum_{i,j: S_{ij}=1, i\neq j} Y'_{ij}\\
&\leq \sum_{i=1}^n k_s + \sum_{i,j:S_{ij}=-1} (\theta-1)
- \sum_{i,j:S_{ij}=1,i\neq j} (\theta + 1)\\
&=nk_s +(0.5n^2-n)(\theta-1) -0.5n^2(\theta+1)\\
&= nk_s-n^2-n(\theta-1) \\
&\leq 2nk_s -n^2.
\end{align*}
We conclude that $k_s \geq n/2$.
\end{proof}

The construction of Theorem \ref{thm:gap} shows that there {\em
  exists} data sets for which an asymmetric binary hash might be
much shorter then a symmetric hash.  This is an important observation
as it demonstrates that asymmetric hashes could be much more powerful,
and should prompt us to consider them instead of symmetric hashes.
The precise construction of Theorem \ref{thm:gap} is of course rather
extreme (in fact, the most extreme construction possible) and we would
not expect actual data sets to have this exact structure, but we will
show later significant gaps also on real data sets.  

\section{Approximate Binary Codes}

As we turn to real data sets, we also need to depart from seeking a
binary coding that {\em exactly} captures the similarity matrix.
Rather, we are usually satisfied with merely approximating $S$, and
for any fixed code length $k$ seek the (symmetric or asymmetric)
$k$-bit code that ``best captures'' the similarity matrix $S$.  This
is captured by the following optimization problem:
\begin{equation}
  \label{eq:aproxUV}
\min_{U,V,\theta}\;\;  
L(Y;S) \triangleq  
\beta\!\!\!\!\!\!\sum_{i,j:S_{ij}=1}\!\!\!\!\!\! \ell(Y_{ij}) +
(1-\beta)\!\!\!\!\!\!\!\! \sum_{i,j:S_{ij}=-1}\!\!\!\!\!\!\!\!
\ell(-Y_{ij})\quad
\begin{aligned}[t]
\textrm{s.t.}\;\;
&U,V\in \{\pm 1\}^{k\times n}\quad\theta\in\R\\
& Y\defeq U^{\top}V - \theta\unitmatrix_n
\end{aligned}
\end{equation}
where $\ell(z)=\mathbf{1}_{z\leq 0}$ is the zero-one-error and
$\beta$ is a parameter that allows us to weight
positive and negative errors differently.  Such weighting can
compensate for $S_{ij}$ being imbalanced (typically many more pairs of
points are non-similar rather then similar), and allows us to obtain
different balances between precision and recall.

The optimization problem \eqref{eq:aproxUV} is a discrete,
discontinuous and highly non-convex problem.  In our experiments, we
replace the zero-one loss $\ell(\cdot)$ with a continuous loss and
perform local search by greedily updating single bits so as to improve
this objective.  Although the resulting objective (let alone the
discrete optimization problem) is still not convex even if $\ell(z)$
is convex, we found it beneficial to use a loss function that is not
flat on $z<0$, so as to encourage moving towards the correct sign.  In
our experiments, we used the square root of the logistic loss,
$\ell(z) = \log^{1/2}(1+e^{-z})$.

Before moving on to out-of-sample generalizations, we briefly report
on the number of bits needed empirically to find good approximations
of actual similarity matrices with symmetric and asymmetric codes.  We
experimented with several data sets, attempting to fit them with both
symmetric and asymmetric codes, and then calculating average precision
by varying the threshold $\theta$ (while keeping $U$ and $V$ fixed).
Results for two similarity matrices, one based on Euclidean distances
between points uniformly distributed in a hypoercube, and the other
based on semantic similarity between images, are shown in Figure \ref{fig:fitmat}.

\begin{figure}[t!]
\vspace{0.2in}
\hbox{ \centering
\setlength{\epsfxsize}{3.1in}
\epsfbox{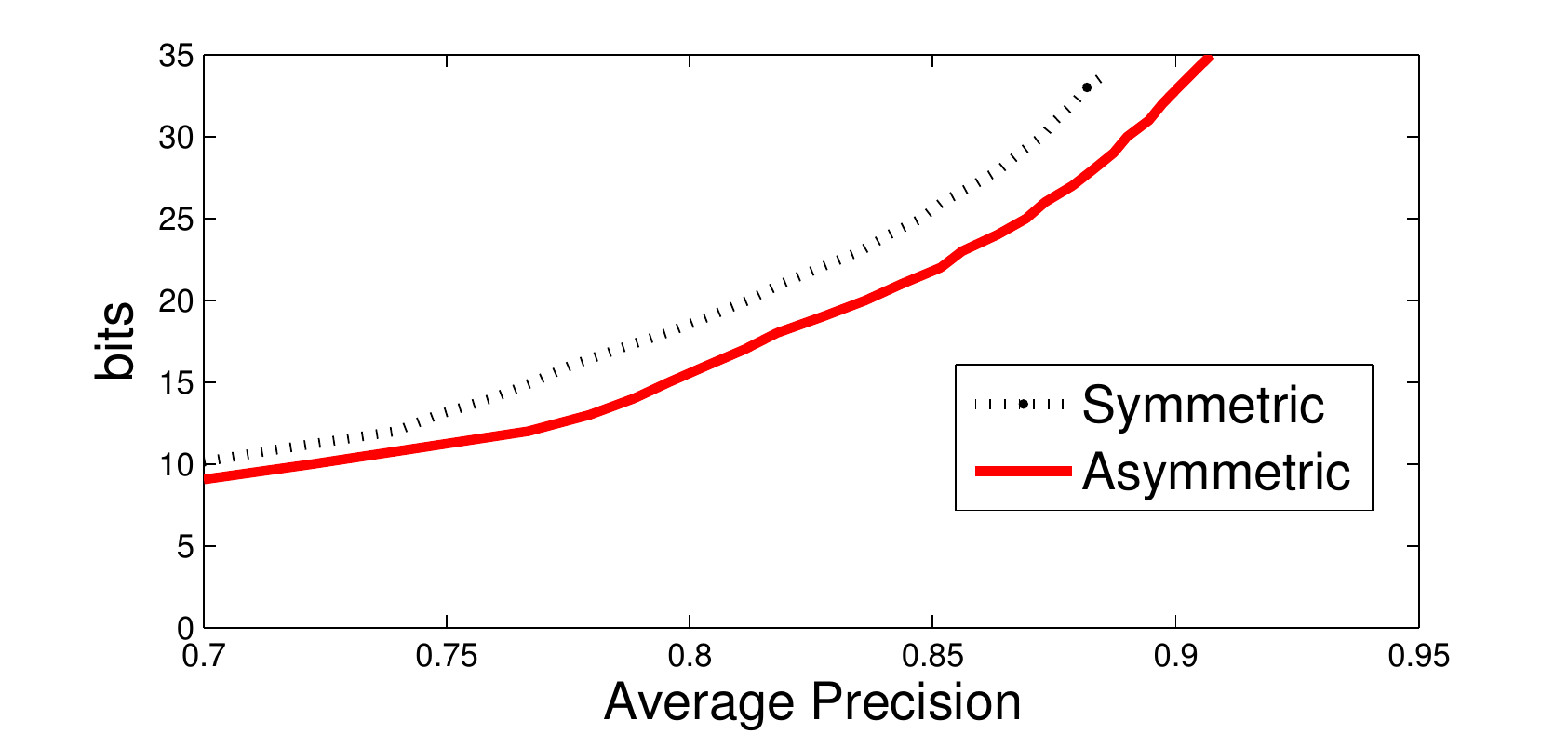}
\setlength{\epsfxsize}{3.1in}
\epsfbox{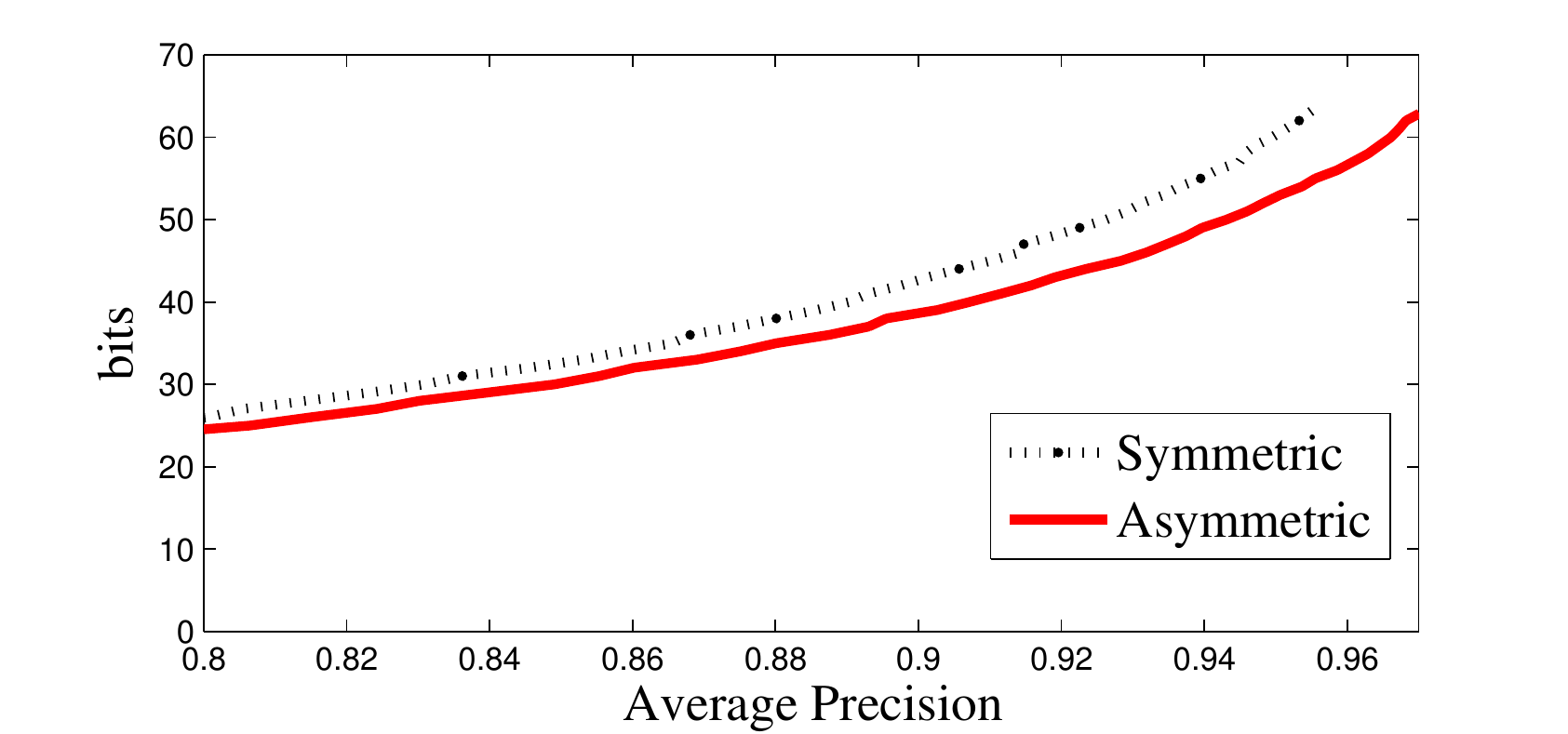}
\begin{picture}(0,0)(0,0)
\put(-370, +104){ 10-D Uniform}
\put(-135, +104){ LabelMe }
\end{picture}
}
\small\caption{\small Number of bits required for approximating two similarity
  matrices (as a function of average precision).  Left: uniform data
  in the 10-dimensional hypercube, similarity represents a thresholded
  Euclidean distance, set such that 30\% of the similarities are
  positive.  Right: Semantic similarity of a subset of LabelMe images,
  thresholded such that 5\% of the similarities are positive. 
}
\label{fig:fitmat}
\end{figure}



\section{Out of Sample Generalization: Learning a Mapping}

So far we focused on learning binary codes over a fixed set of objects by 
associating an arbitrary code word with each object and completely
ignoring the input representation of the objects $x_i$.
We discussed only how well binary hashing can {\em approximate} the
similarity, but did not consider {\em generalizing} to additional new
objects.  However, in most applications, we would like to be able to
have such an out-of-sample generalization. That is, we would like to
learn a mapping $f:\cX\rightarrow \pmo^k$ over an infinite domain
$\cX$ using only a finite training set of objects, and then apply the
mapping to obtain binary codes $f(x)$ for future objects to be
encountered, such that $S(x,x') \approx \sign(
\inner{f(x),f(x')}-\theta )$.  Thus, the mapping
$f:\cX\rightarrow\pmo^k$ is usually limited to some constrained
parametric class, both so we could represent and evaluate it
efficiently on new objects, and to ensure good generalization.  For
example, when $\cX=\R^d$, we can consider linear threshold mappings
$f_W(x) = \sign(Wx)$, where $W\in\R^{k\times d}$ and $\sign(\cdot)$
operates elementwise, as in Minimal Loss Hashing
\cite{NorouziICML11}.  Or, we could also consider more complex classes, such as multilayer networks
\cite{SalakhutdinovH09,NorouziNIPS12}.

We already saw that asymmetric binary codes can allow for better
approximations using shorter codes, so it is natural to seek
asymmetric codes here as well.  That is, instead of learning a single
parametric map $f(x)$ we can learn a pair of maps
$f:\cX\rightarrow\pmo^k$ and $g:\cX\rightarrow\pmo^k$, both
constrained to some parametric class, and a threshold $\theta$, such
that $S(x,x') \approx \sign( \inner{f(x),g(x')}-\theta )$.  This has
the potential of allowing for better approximating the similarity, and
thus better overall accuracy with shorter codes (despite possibly
slightly harder generalization due to the increase in the number of
parameters).

In fact, in a typical application where a database of objects is
hashed for similarity search over future queries, asymmetry allows us
to go even further.  Consider the following setup: We are given $n$
objects $x_1,\ldots,x_n \in \cX$ from some infinite domain $\cX$ and
the similarities $S(x_i,x_j)$ between these objects. Our goal is to
hash these objects using short binary codes which would allow us to
quickly compute approximate similarities between these objects (the
``database'') and future objects $x$ (the ``query'').  That is, we
would like to generate and store compact binary codes for objects in a
database.  Then, given a new query object, we would like to
efficiently compute a compact binary code for a given query and
retrieve similar items in the database very fast by finding binary
codes in the database that are within small hamming distance from the
query binary code.  Recall that it is important to ensure that the bit
length of the hashes are small, as short codes allow for very fast
hamming distance calculations and low communication costs if the codes
need to be sent remotely. More importantly, if we would like to
store the database in a hash table allowing immediate lookup, the size
of the hash table is exponential in the code length.

The symmetric binary hashing approach (e.g.~\cite{NorouziICML11}),
would be to find a single parametric mapping $f:\cX\rightarrow\pmo^k$
such that $S(x,x_i) \approx \sign( \inner{f(x),f(x_i)}-\theta )$ for
future queries $x$ and database objects $x_i$, calculate $f(x_i)$ for
all database objects $x_i$, and store these hashes (perhaps in a hash
table allowing for fast retrieval of codes within a short hamming
distance).  The asymmetric approach described above would be to find
two parametric mappings $f:\cX\rightarrow\pmo^k$ and
$g:\cX\rightarrow\pmo^k$ such that $S(x,x_i) \approx \sign(
\inner{f(x),g(x_i)}-\theta )$, and then calculate and store $g(x_i)$.

But if the database is fixed, we can go further.  There is actually no
need for $g(\cdot)$ to be in a constrained parametric class, as we do
not need to generalize $g(\cdot)$ to future objects, nor do we have to
efficiently calculate it on-the-fly nor communicate $g(x)$ to the
database.  Hence, we can consider allowing the database hash function
$g(\cdot)$ to be an arbitrary mapping.  That is, we aim to find a
simple parametric mapping $f:\cX\rightarrow\pmo^k$ and $n$ {\em
  arbitrary codewords} $v_1,\ldots,v_n \in\pmo^k$ for each
$x_1,\ldots,x_n$ in the database, such that $S(x,x_i) \approx
\sign(\inner{f(x),v_i}-\theta )$ for future queries $x$ and for the
objects $x_i,\ldots,x_n$ in the database.  This form of asymmetry can
allow us for greater approximation power, and thus better accuracy
with shorter codes, at no additional computational or storage cost.

In Section \ref{sec:emp} we evaluate empirically both of the above asymmetric
strategies and demonstrate their benefits. But before doing so, in the
next Section, we discuss a local-search approach for finding the
mappings $f,g$, or the mapping $f$ and the codes $v_1,\ldots,v_n$.

\section{Optimization}
\label{Sec:Optimization} 
We focus on $x\in\cX\subset\R^d$ and linear threshold hash maps of the
form $f(x)=\sign(Wx)$, where $W\in\R^{k\times d}$.  Given training
points $x_1,\ldots,x_n$, we consider the two models discussed above:
\begin{description}[topsep=0pt,parsep=0pt,partopsep=0pt]
\item[\LinLin] We learn two linear threshold functions $f(x)=\sign(W_q
  x)$ and $g(x)=\sign(W_d x)$.  I.e.~we need to find the parameters
  $W_q,W_d\in\R^{k\times d}$.
\item[\LinV] We learn a single linear threshold function $f(x)=\sign(W_q
  x)$ and $n$ codewords $v_1,\ldots,v_n\in\R^k$.  I.e.~we need to find
  $W_q\in\R^{k\times d}$, as well as $V\in\R^{k\times n}$ (where $v_i$
  are the columns of $V$).
\end{description}
In either case we denote $u_i=f(x_i)$, and in \LinLin also
$v_i=g(x_i)$, and learn by attempting to minimizing the objective in
\eqref{eq:aproxUV}, where $\ell(\cdot)$ is again a continuous loss
function such as the square root of the logistic.  That is, we learn
by optimizing the problem \eqref{eq:aproxUV} with the additional
constraint $U=\sign(W_q X)$, and possibly also $V=\sign(W_d X)$ (for
$\LinLin$), where $X=[x_1 \ldots x_n]\in\R^{d\times n}$.

We optimize these problems by alternatively updating rows
of $W_q$ and either rows of $W_d$ (for \LinLin) or of $V$ (for \LinV).
To understand these updates, let us first return to \eqref{eq:aproxUV}
(with unconstrained $U,V$), and consider updating a row
$u^{(t)}\in\R^n$ of $U$.  Denote
$$Y^{(t)}=U^{\top}V-\theta\unitmatrix_n-{u^{(t)}}^{\top}v^{(t)},$$ the prediction
matrix with component $t$ subtracted away.  It is easy to verify that
we can write:
\begin{equation}
  \label{eq:usingM}
  L(U^{\top}V-\theta\unitmatrix_n;S) = C - u^{(t)} M {v^{(t)}}^{\top}
\end{equation}
where $C=\frac{1}{2}(L(Y^{(t)}+\unitmatrix_n;S)+L(Y^{(t)}-\unitmatrix_n;S))$ does not depend on
$u^{(t)}$ and $v^{(t)}$, and $M\in\R^{n\times n}$ also does not depend
on $u^{(t)},v^{(t)}$ and is given by:
$$
M_{ij}= \frac{\beta_{ij}}{2} \left(\ell(S_{ij}(Y_{ij}^{(t)}-1)) -
  \ell(S_{ij}(Y_{ij}^{(t)}+1))\right),$$ with $\beta_{ij}=\beta$ or
$\beta_{ij}=(1-\beta)$ depending on $S_{ij}$.  This implies that we
can optimize over the entire row $u^{(t)}$ concurrently by maximizing
$u^{(t)} M {v^{(t)}}^{\top}$, and so the optimum (conditioned on
$\theta$, $V$ and all other rows of $U$) is given by
\begin{equation}
  \label{eq:utopt}
 u^{(t)} = \sign(M v^{(t)}). 
\end{equation}
Symmetrically, we can optimize over the row $v^{(t)}$ conditioned on
$\theta$, $U$ and the rest of $V$, or in the case of \LinV, conditioned
on $\theta$, $W_q$ and the rest of $V$.

Similarly, optimizing over a row $w^{(t)}$ of $W_q$ amount to
optimizing:
\begin{equation}
  \label{eq:optWt}
  \arg\max_{w^{(t)}\in\R^d}\; \sign(w^{(t)}X) M {v^{(t)}}^{\top} = \arg\max_{w^{(t)}\in\R^d}\;\sum_i \inner{M_i,v^{(t)}} \sign(\inner{w^{(t)},x_i}).
\end{equation}
This is a weighted zero-one-loss binary classification problem, with
targets $\sign(\inner{M_i,v^{(t)}})$ and weights
$\abs{\inner{M_i,v^{(t)}}}$.  We approximate it as a weighted logistic
regression problem, and at each update iteration attempt to improve the
objective using a small number (e.g.~10) epochs of stochastic gradient
descent on the logistic loss.  For \LinLin, we also symmetrically
update rows of $W_d$.

When optimizing the model for some bit-length $k$, we
initialize to the optimal $k-1$-length model.  We initialize the new
bit either randomly, or by thresholding the rank-one projection of
$M$ (for unconstrained $U,V$) or the rank-one projection after
projecting the columns of $M$ (for \LinV) or both rows and columns of
$M$ (for \LinLin) to the column space of $X$.  We take the
initialization (random, or rank-one based) that yields a lower
objective value.

\begin{figure}[t!]
\vspace{0.2in}
\hbox{ \centering
\setlength{\epsfxsize}{2in}
\epsfbox{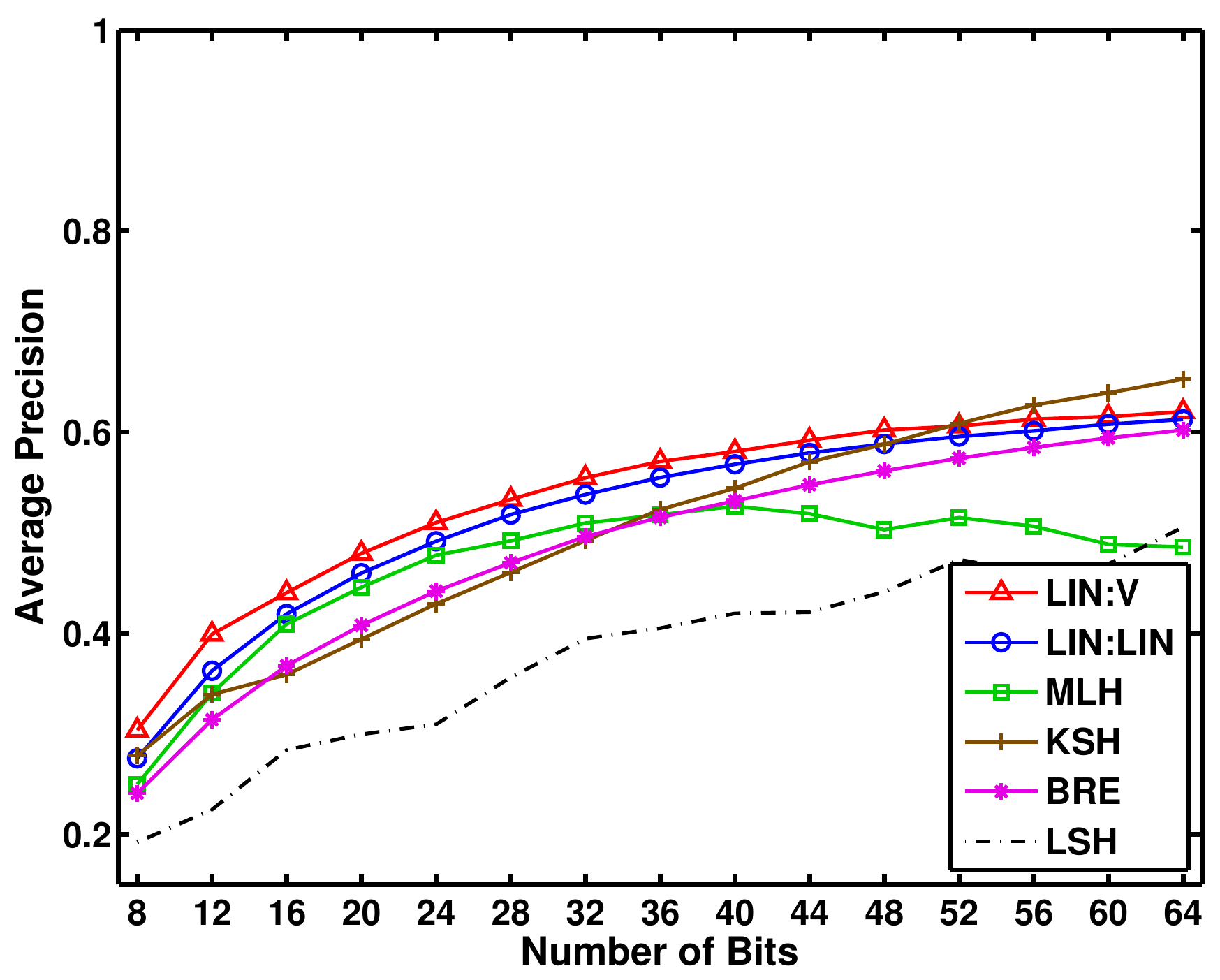}
\setlength{\epsfxsize}{2in}
\epsfbox{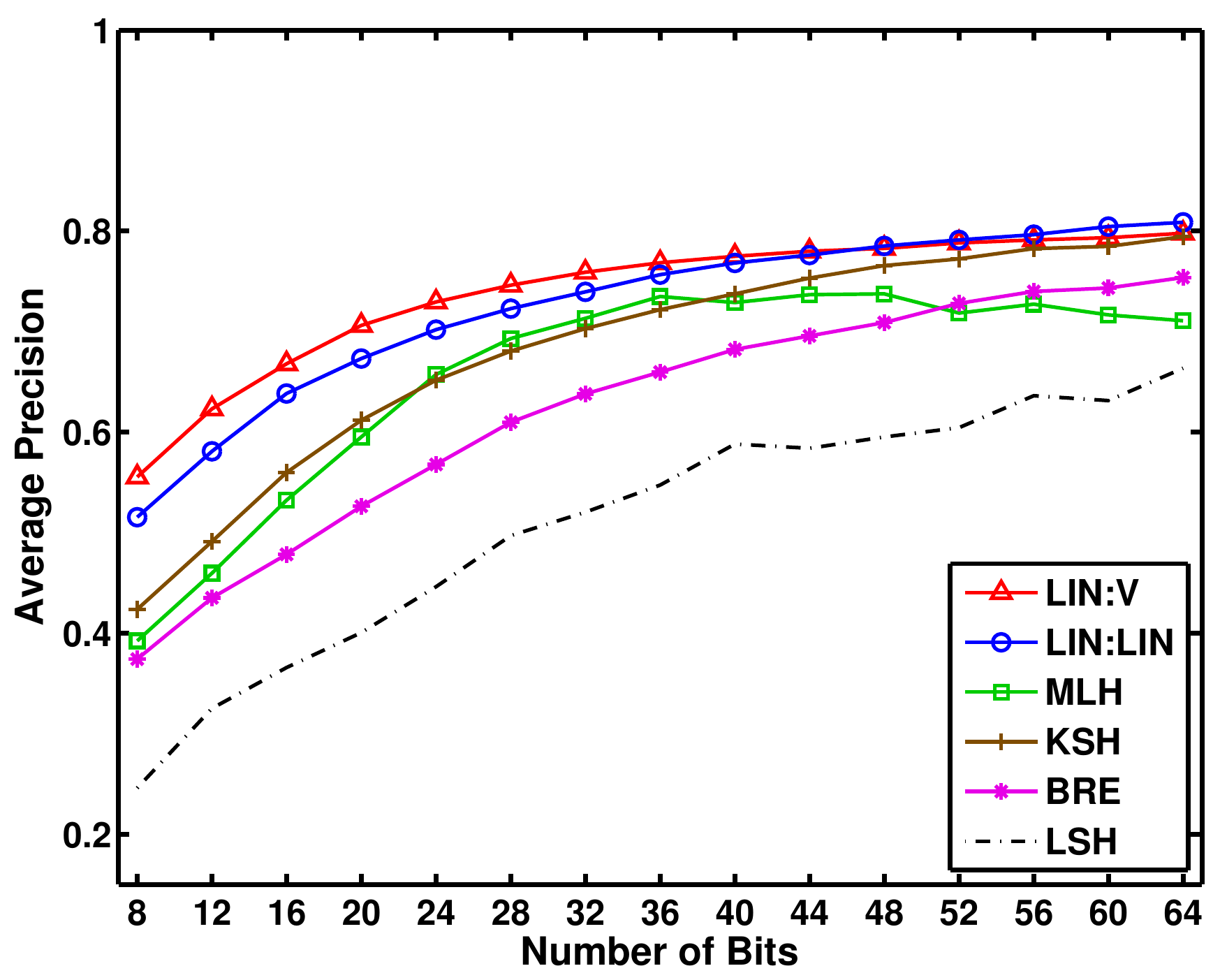}
\setlength{\epsfxsize}{2in}
\epsfbox{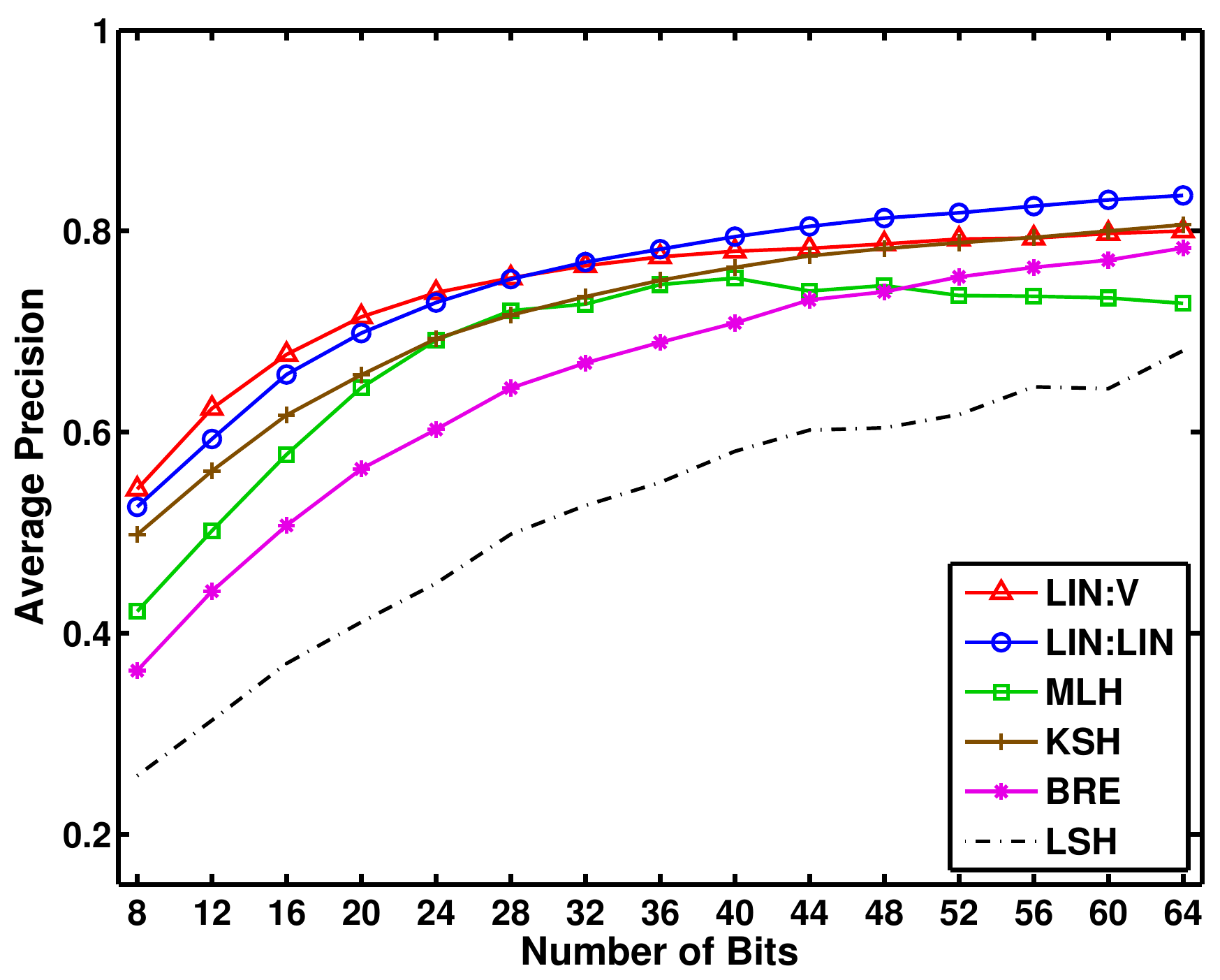}
\begin{picture}(0,0)(0,0)
\put(-400, +118){ 10-D Uniform}
\put(-240, +118){ LabelMe}
\put(-90, +118){ MNIST}
\end{picture}
}
\vspace{0.3in}
\hbox{ \centering 
\setlength{\epsfxsize}{2in}
\epsfbox{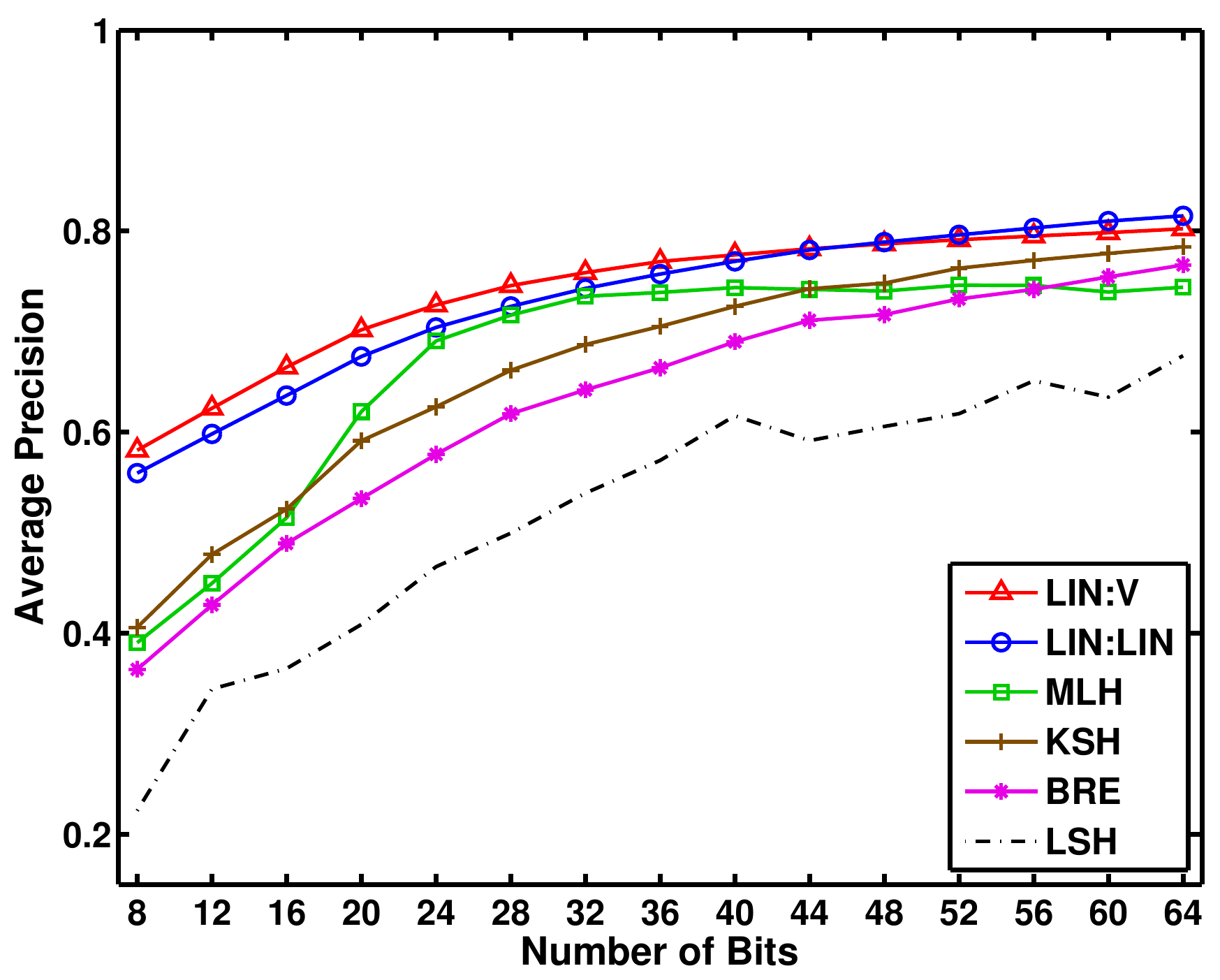}
\setlength{\epsfxsize}{2in} 
\epsfbox{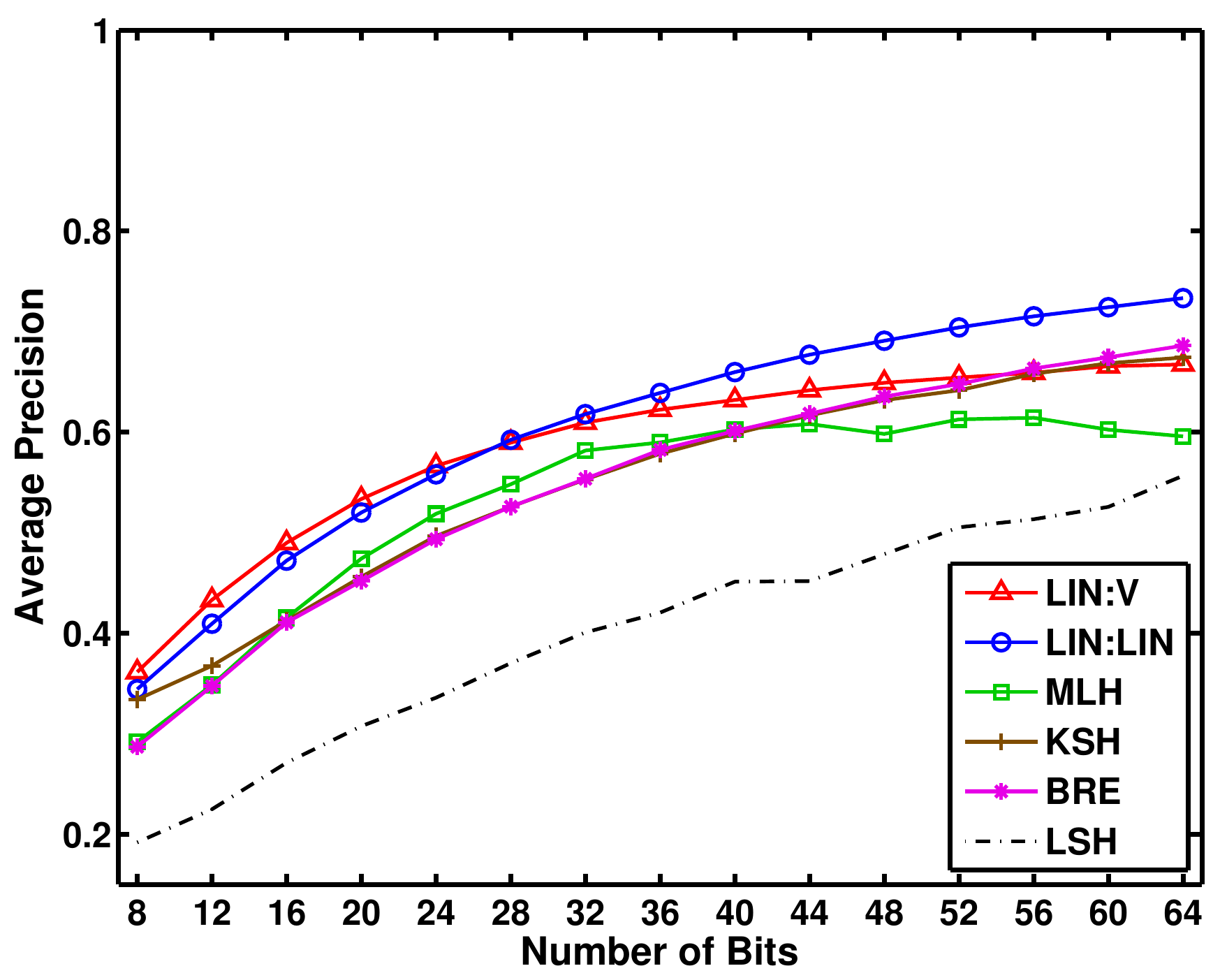}
\setlength{\epsfxsize}{2in}
\epsfbox{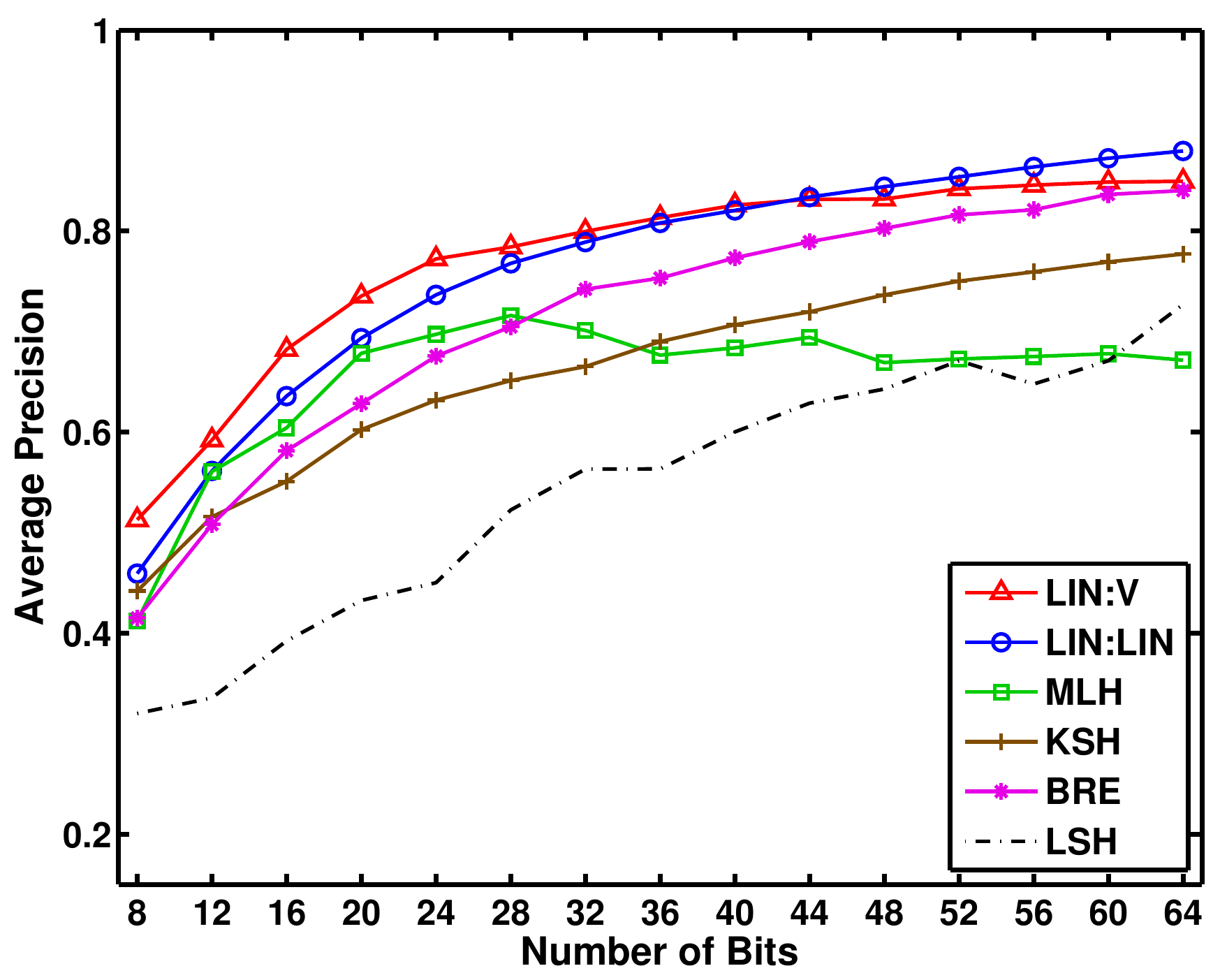}
\begin{picture}(0,0)(0,0)
\put(-395, +118){ Peekaboom}
\put(-250, +118){ Photo-tourism}
\put(-90, +118){ Nursery }
\end{picture}
}
\small\caption{\small Average Precision (AP) of points retrieved using Hamming distance as
a function of code length for six datasets. 
Five curves represent: LSH, BRE, KSH, MLH, and two variants of
our method: Asymmetric LIN-LIN and Asymmetric LIN-V. 
(Best viewed in color.)
}
\label{fig:AP}
\end{figure}

\section{Empirical Evaluation}\label{sec:emp}
In order to empirically evaluate the benefits of asymmetry in hashing,
we replicate the experiments of \cite{NorouziICML11}, which were in turn based
on  \cite{Kulis2009}, on six datasets using learned (symmetric)
linear threshold codes.
These datasets include:
LabelMe and Peekaboom are collections of images,
represented as 512D GIST features~\cite{TorralbaCVPR08}, Photo-tourism is a database of
image patches, represented as 128 SIFT features~\cite{SnavelySIGGRAPH06}, MNIST is a collection of
785D greyscale handwritten images, and Nursery contains 8D features.
Similar to \cite{NorouziICML11,Kulis2009}, we also constructed a synthetic 10D Uniform dataset,
containing uniformly sampled 4000 points for a 10D hypercube. We used 
1000 points for training and 3000 for  testing. 

\begin{figure}[t!]
\vspace{0.2in}
\hbox{ \centering
\setlength{\epsfxsize}{2in}
\epsfbox{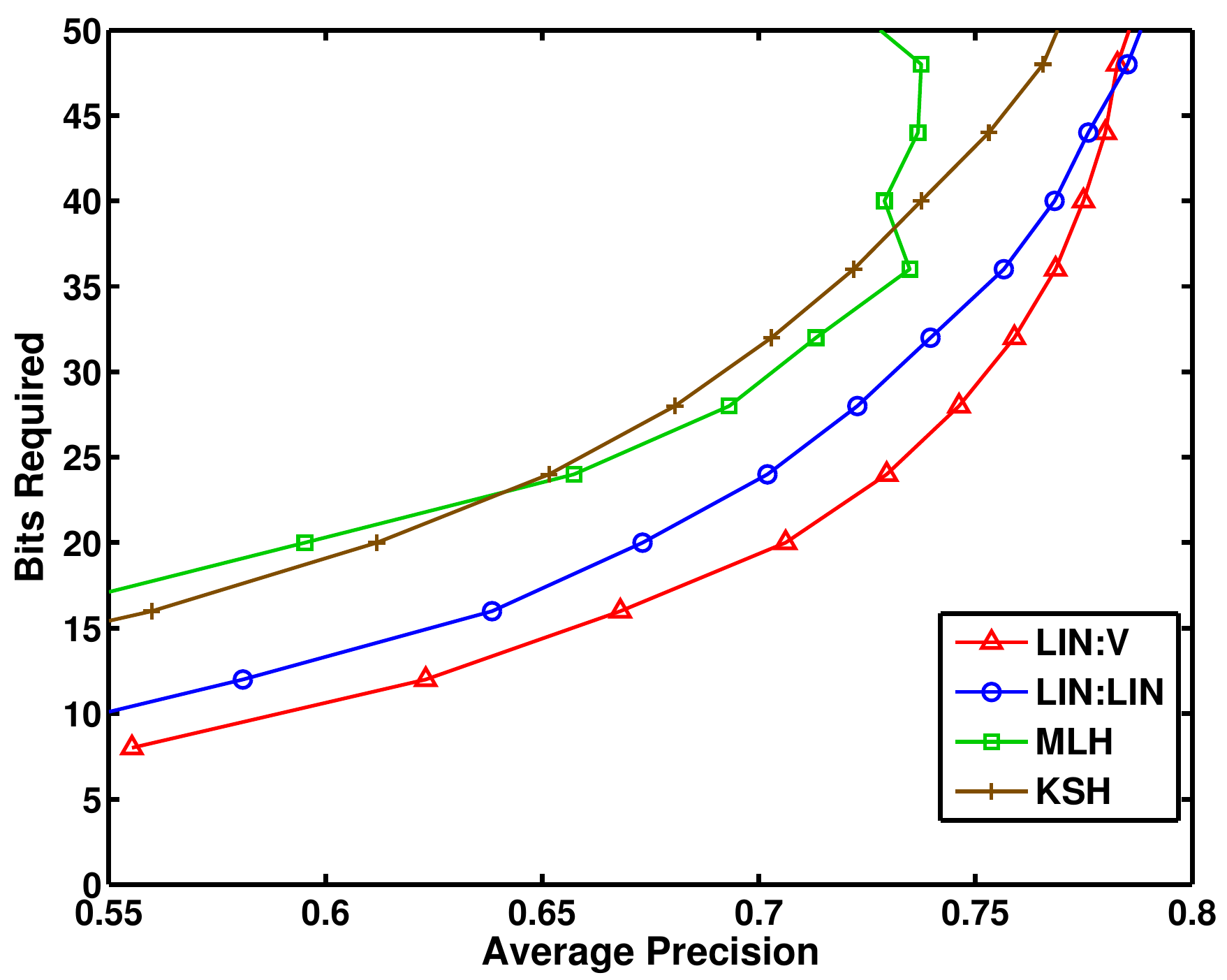}
\setlength{\epsfxsize}{2in}
\epsfbox{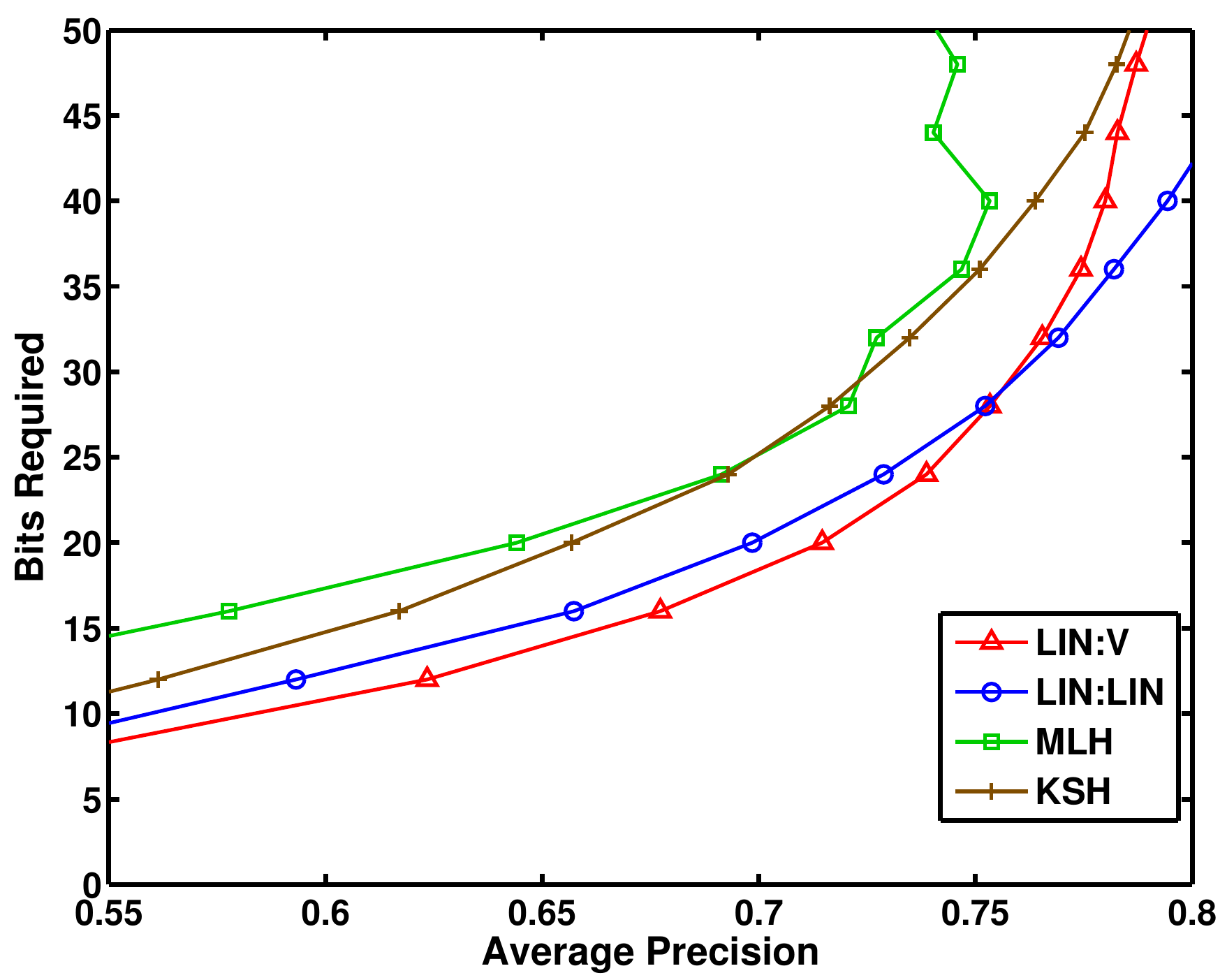}
\setlength{\epsfxsize}{2in}
\epsfbox{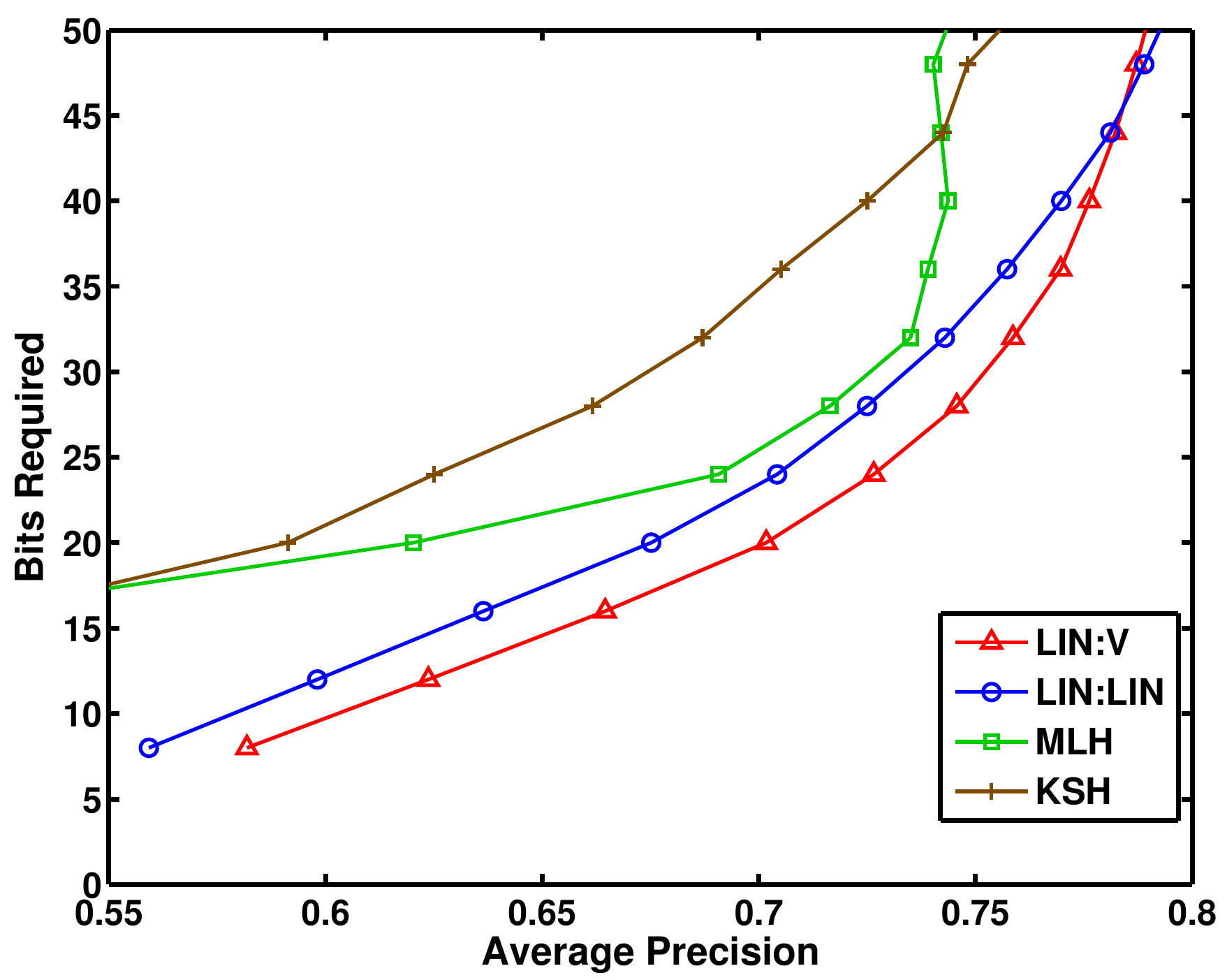}
\begin{picture}(0,0)(0,0)
\put(-390, +118){ LabelMe}
\put(-238, +118){ MNIST}
\put(-98, +118){ Peekaboom}
\end{picture}
}
\small\caption{\small Code length required as a function of Average Precision (AP) for three datasets.
}
\label{fig:bit}
\end{figure}

For each dataset,
we find the Euclidean distance at which each point has, on average, 50 neighbours.
This defines our ground-truth similarity in terms of neighbours and non-neighbours.
So for each dataset, we are given
a set of $n$ points $x_1,\ldots,x_n$, represented as vectors in
$\cX=\R^d$, and the binary similarities $S(x_i,x_j)$ between the
points, with +1 corresponding to $x_i$ and $x_j$ being neighbors and -1 otherwise.
Based on these $n$
training points, \cite{NorouziICML11} present a sophisticated optimization
approach for learning a thresholded linear hash function of the form
$f(x)=\sign(Wx)$, where $W\in\R^{k\times d}$.  This hash function is
then applied and $f(x_1),\ldots,f(x_n)$ are stored in the database.
\cite{NorouziICML11} evaluate the quality of the hash by considering
an independent set of {\em test points} and comparing $S(x,x_i)$ to
$\sign(\inner{f(x),f(x_i)}-\theta)$ on the test points $x$ and the
database objects (i.e.~training points) $x_i$.  

In our experiments, we followed the same protocol, but with the two
asymmetric variations $\LinLin$ and $\LinV$, using the
optimization method discussed in Sec.~\ref{Sec:Optimization}. 
In order to obtain different balances between precision and recall, we
should vary $\beta$ in \eqref{eq:aproxUV}, obtaining different codes
for each value of $\beta$.  However, as in the experiments of
\cite{NorouziICML11}, we actually learn a code (i.e.~mappings
$f(\cdot)$ and $g(\cdot)$, or a mapping $f(\cdot)$ and matrix $V$)
using a fixed value of $\beta=0.7$, and then only vary the threshold
$\theta$ to obtain the precision-recall curve.

In all of our experiments, in addition to Minimal Loss Hashing (MLH), we also 
compare our approach to three other widely used methods: Kernel-Based Supervised Hashing (KSH) of \cite{LiuCVPR12}, Binary Reconstructive Embedding (BRE) of \cite{Kulis2009}, and
Locality-Sensitive Hashing (LSH) of~\cite{DatarIIM04}. \footnote{We used the BRE, KSH and MLH implementations available from the original authors.  For each method, we followed the instructions provided by the authors. More specifically, we set the number of points for each hash
function in BRE to 50 and the number of anchors in KSH to 300 (the default values). For MLH, we learn the threshold and shrinkage parameters by cross-validation and other parameters are initialized to the suggested values in the package.}

In our first set of experiments, we test performance of the
asymmetric hash codes as a function of the bit length.
Figure~\ref{fig:AP} displays Average Precision (AP) of data points retrieved using Hamming distance as
a function of code length.
These results are similar to ones reported by \cite{NorouziICML11}, where
MLH yields higher precision compared to BRE and LSH. Observe that for all six datasets
both variants of our method, asymmetric \LinLin and asymmetric \LinV,
consistently outperform all other methods for different binary code length.
The gap is particularly large for short codes.  
For example, for the LabelMe dataset, MLH and KSH with 
16 bits achieve AP of 0.52 and 0.54 respectively, whereas \LinV
already achieves AP of 0.54 with only 8 bits. Figure~\ref{fig:bit} shows similar performance gains 
appear for a number of other datasets. We also note across all datasets 
\LinV improves upon \LinLin for short-sized codes. 
These results clearly show that an asymmetric binary hash can be much more compact 
than a symmetric hash.  

Next, we show, in Figure~\ref{fig:PR}, the full Precision-Recall curves
for two datasets, LabelMe and MNIST, and for two specific code
lengths: 16 and 64 bits.  The performance of \LinLin and \LinV is
almost uniformly superior to that of MLH, KSH and BRE methods.  We observed
similar behavior also for the four other datasets across various
different code lengths.

\begin{figure}[t!]
\vspace{0.3in}
\hbox{ \centering 
\setlength{\epsfxsize}{1.5in}
\epsfbox{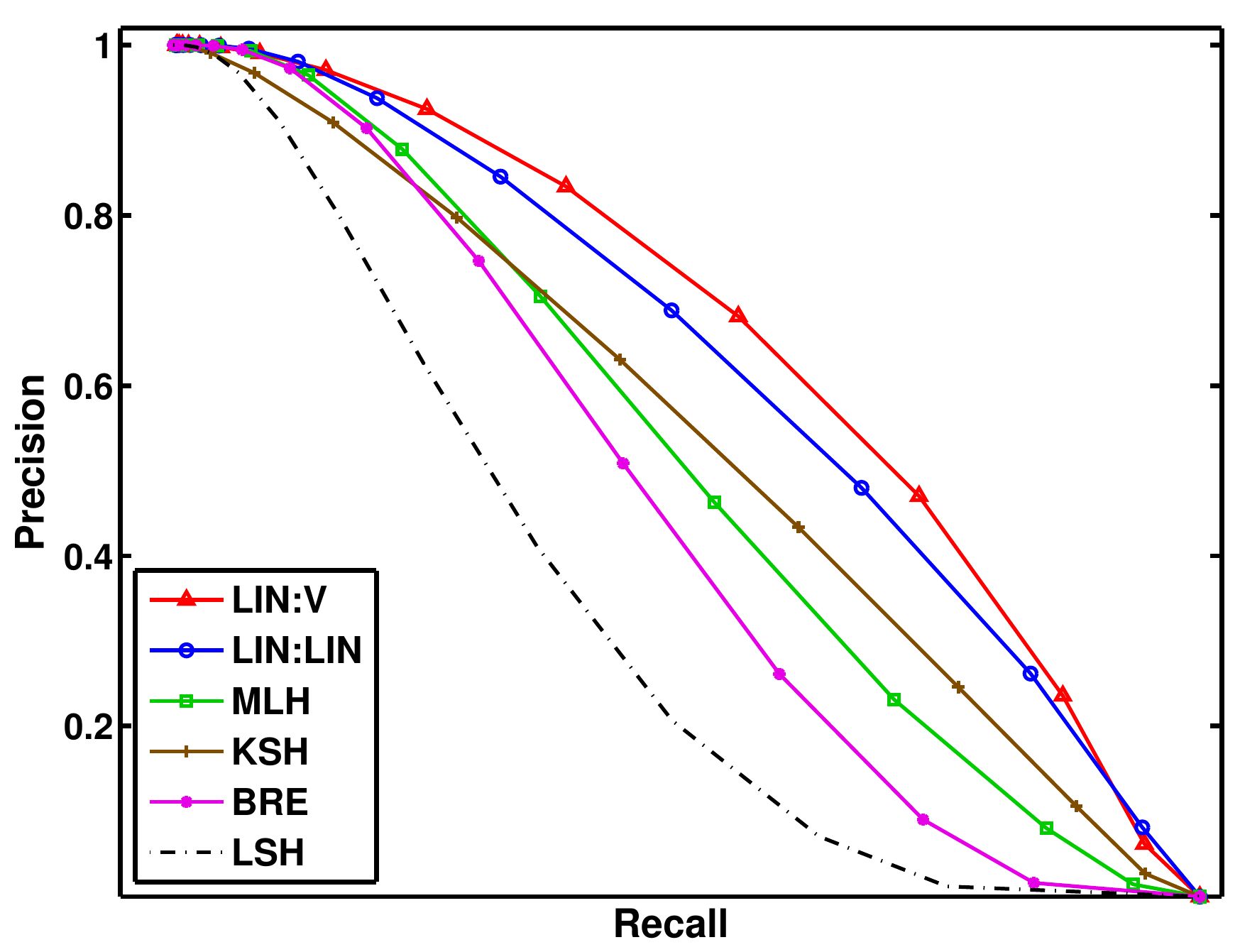}
\hspace{-0.1in}
\setlength{\epsfxsize}{1.5in}
\epsfbox{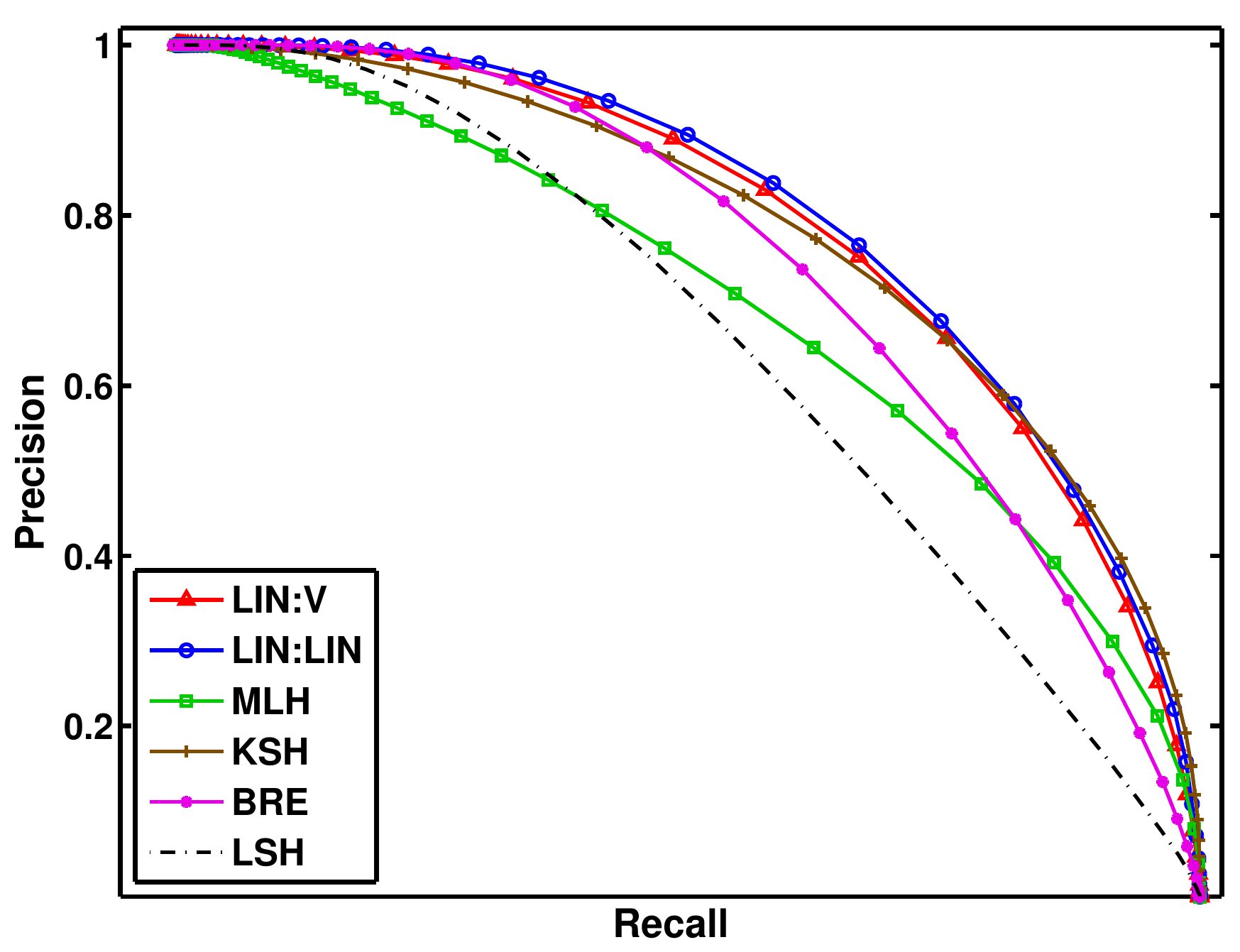}
\hspace{0.1in}
\setlength{\epsfxsize}{1.5in}
\epsfbox{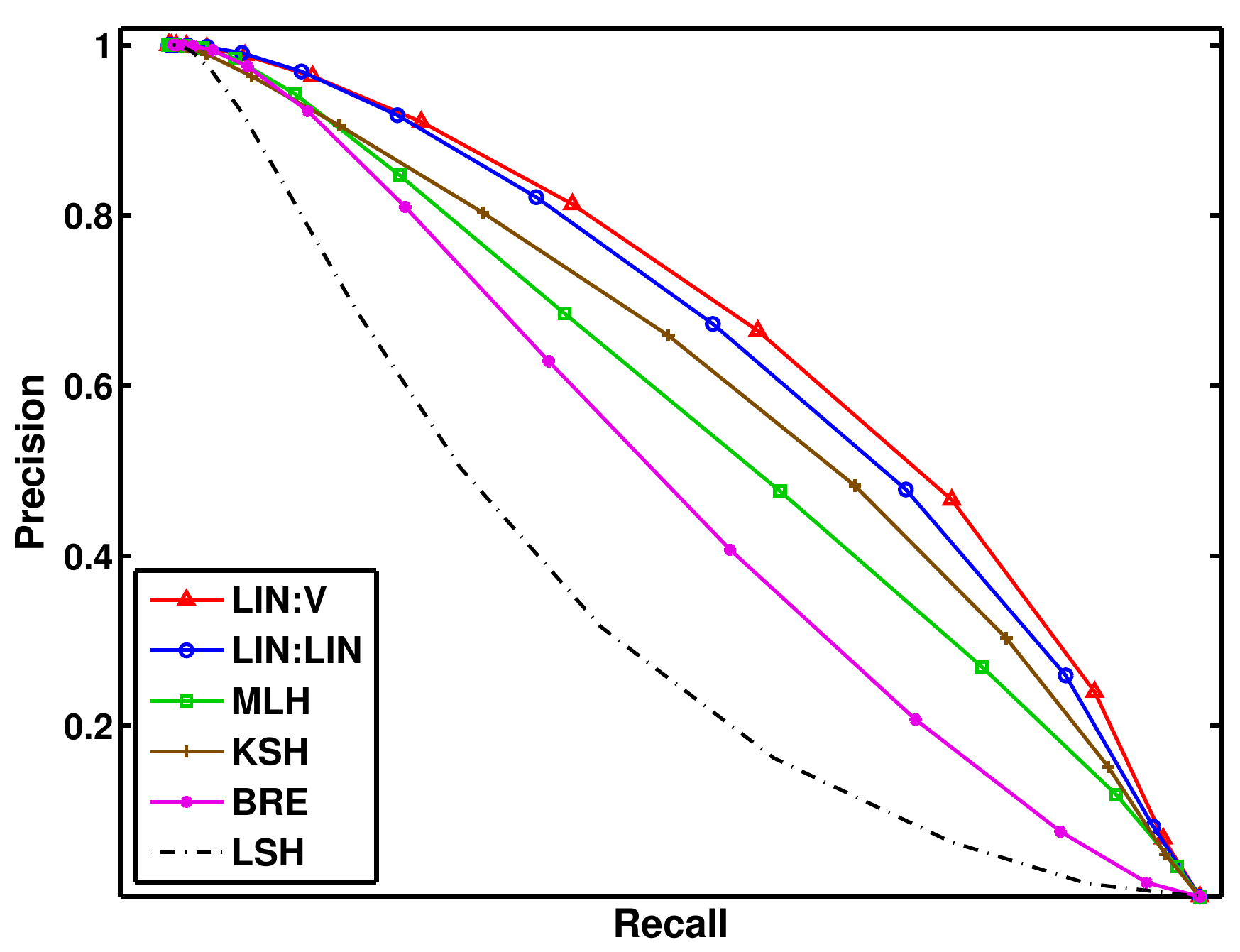}
\hspace{-0.1in}
\setlength{\epsfxsize}{1.5in}
\epsfbox{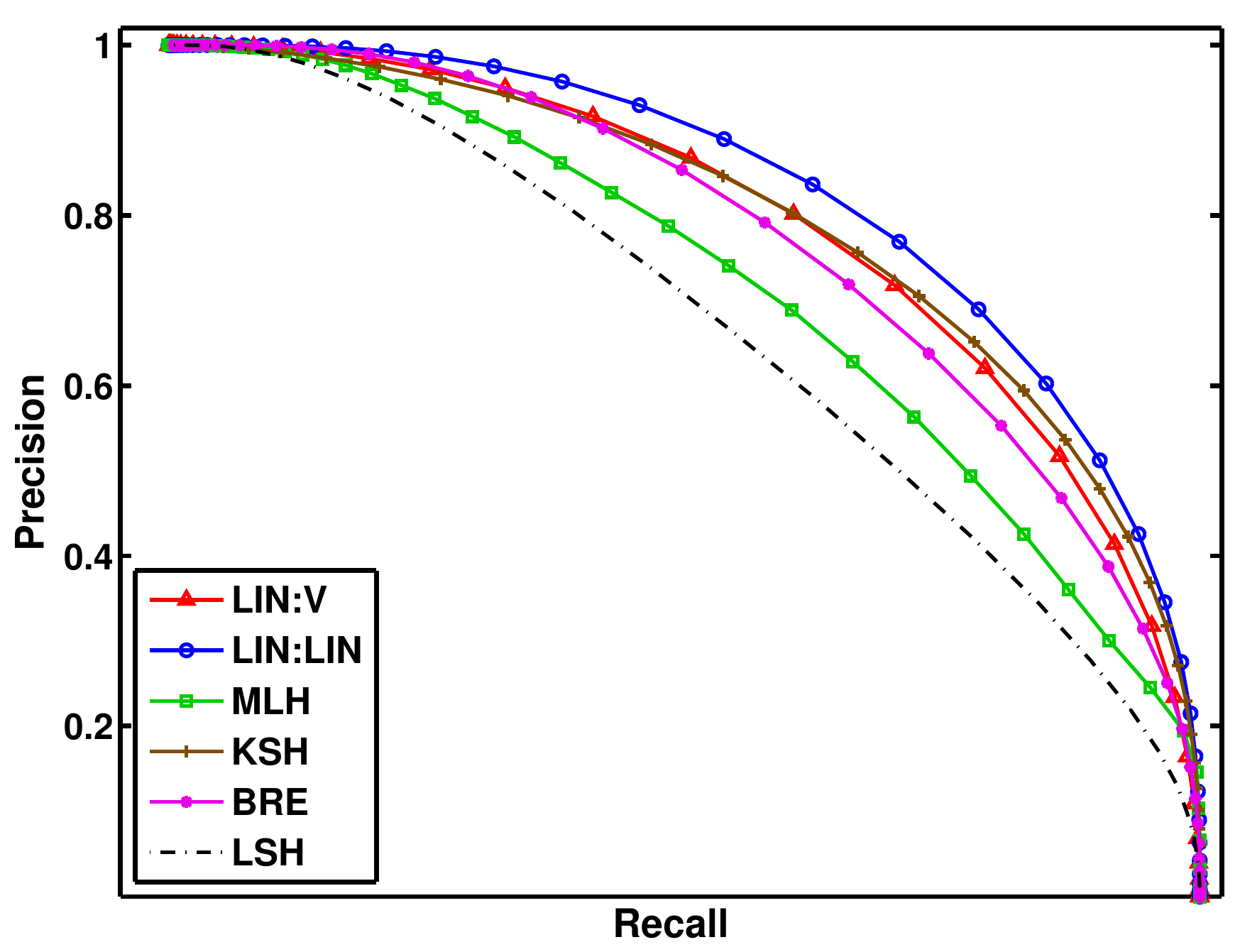}
\begin{picture}(0,0)(0,0)
\put(-410, +85){ 16 bits}
\put(-300, +85){ 64 bits}
\put(-180, +85){ 16 bits}
\put(-70, +85){ 64bits}
\put(-360, +100){ LabelMe}
\put(-125, +100){ MNIST}
\end{picture}
}
\small\caption{\small Precision-Recall curves for LabelMe and MNIST datasets using 16 and 64 binary codes.
(Best viewed in color.)
}
\label{fig:PR}
\end{figure}


\begin{figure}[t!]
\vspace{0.1in}
\hbox{ \centering 
\setlength{\epsfxsize}{3.1in}

\epsfbox{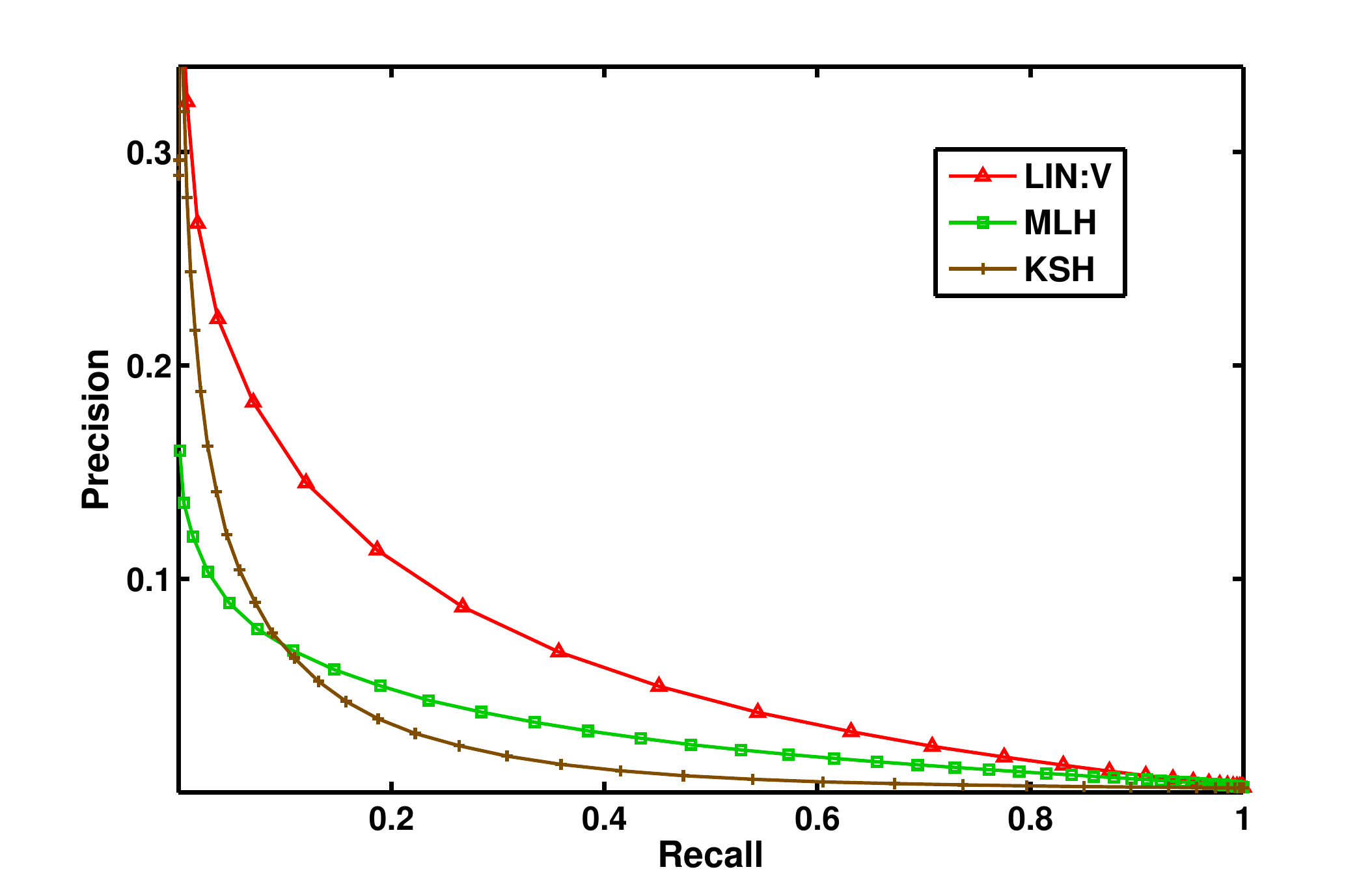}
\setlength{\epsfxsize}{3.1in}
\epsfbox{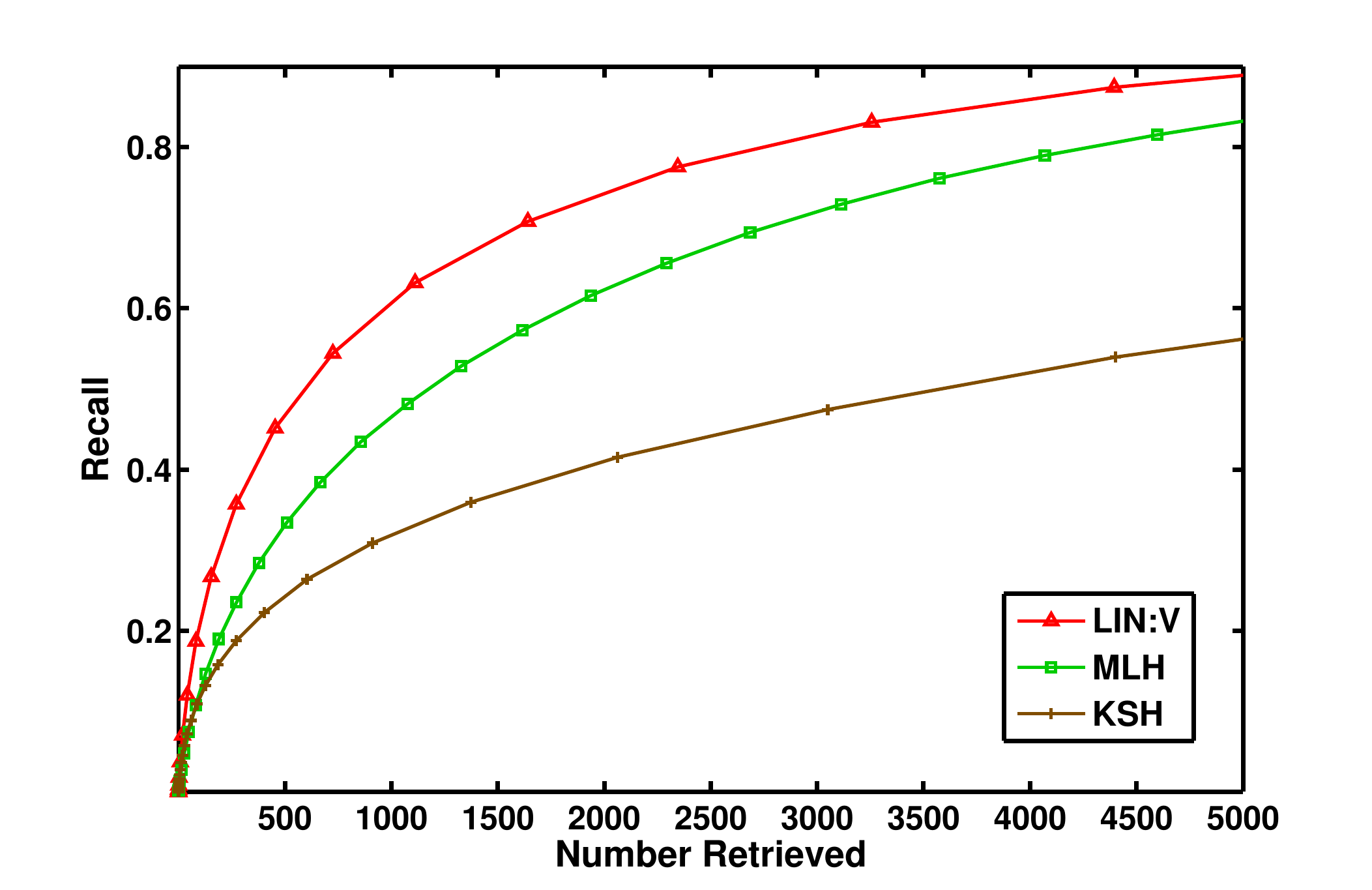}

}
\small\caption{\small {\bf Left:} Precision-Recall curves for the Semantic 22K LabelMe dataset
{\bf Right:} Percentage of 50 ground-truth neighbours as a function of retrieved images.
 (Best viewed in color.)} 
\label{fig:22K}
\end{figure}


Results on previous 6 datasets show that asymmetric binary codes can
significantly outperform other state-of-the-art methods on relatively
small scale datasets.  We now consider a much larger
LabelMe dataset~\cite{TorralbaCVPR08}, called {\it Semantic 22K
  LabelMe}.  It contains 20,019 training images and 2,000 test images,
where each image is represented by a 512D GIST descriptor.  The
dataset also provides a semantic similarity $S(x,x')$ between two
images based on semantic content (object labels overlap in two
images). As argued by ~\cite{NorouziICML11}, hash functions learned
using semantic labels should be more useful for content-based image
retrieval compared to Euclidean distances.  Figure~\ref{fig:22K} shows
that \LinV with 64 bits substantially outperforms MLH and KSH with 64 bits.

\section{Summary} 

The main point we would like to make is that when considering binary
hashes in order to approximate similarity, even if the similarity
measure is entirely symmetric and ``well behaved'', much power can be
gained by considering asymmetric codes.  We substantiate this claim by
both a theoretical analysis of the possible power of asymmetric codes,
and by showing, in a fairly direct experimental replication, that
asymmetric codes outperform state-of-the-art results obtained for
symmetric codes.  The optimization approach we use is very crude.
However, even using this crude approach, we could find
asymmetric codes that outperformed well-optimized symmetric codes.  It
should certainly be possible to develop much better, and more
well-founded, training and optimization procedures.

Although we demonstrated our results in a specific setting using
linear threshold codes, we believe the power of asymmetry is far more
widely applicable in binary hashing, and view the experiments here as
merely a demonstration of this power.  Using asymmetric codes instead
of symmetric codes could be much more powerful, and allow for shorter
and more accurate codes, and is usually straightforward and does not
require any additional computational, communication or significant
additional memory resources when using the code.  We would therefore
encourage the use of such asymmetric codes (with two distinct hash
mappings) wherever binary hashing is used to approximate similarity.
\subsubsection*{Acknowledgments}
This research was partially supported by NSF CAREER award CCF-1150062 and NSF grant IIS-1302662.
\bibliographystyle{plain}
\bibliography{asym}
\end{document}